
\typeout{IJCAI--22 Multiple authors example}


\documentclass{article}
\pdfpagewidth=8.5in
\pdfpageheight=11in
\usepackage{ijcai222}

\usepackage[authoryear,round,longnamesfirst]{natbib}

\usepackage{enumitem}
\usepackage{glossaries}
\usepackage{amsmath} 
\usepackage{amsthm}
\usepackage{multirow}
\usepackage{subcaption}
\usepackage[online]{threeparttable}
\usepackage{xcolor}
\usepackage{tikz}
\usetikzlibrary{positioning}

\usepackage{times}

\usepackage{soul}
\usepackage{url}
\usepackage[hidelinks]{hyperref}
\usepackage[utf8]{inputenc}
\usepackage{caption}
\usepackage{graphicx}
\usepackage{amsmath}
\usepackage{amssymb}
\usepackage{booktabs}
\usepackage{cleveref}
\urlstyle{same}

\newtheorem{definition}{Definition}

\newacronym{ad}{AD}{anomaly detection}
\newacronym{occ}{OCC}{one-class classification}
\newacronym{gnn}{GNN}{graph neural network}
\newacronym{gin}{GIN}{graph isomorphism network}
\newacronym{ocgin}{OCGIN}{one-class GIN}
\newacronym{wlk}{WLK}{Weisfeiler-Leman subtree kernel}
\newacronym{pk}{PK}{propagation kernel}
\newacronym{g2v}{G2V}{Graph2Vec}
\newacronym{fgsd}{FGSD}{}
\newacronym{svdd}{deep SVDD}{deep support vector data description}
\newacronym{ocgtl}{OCGTL}{one-class graph transformation learning}
\newacronym{ocgcl}{OCGCL}{one-class graph contrastive learning}
\newacronym{gtl}{GTL}{graph transformation learning}
\newacronym{gtp}{GTP}{graph transformation prediction}
\newacronym{ocpool}{OCPool}{one-class pooling}
\newacronym{ocsvm}{OCSVM}{one-class SVM}
\newacronym{dcl}{DCL}{deterministic contrastive loss}
\newacronym{auc}{AUC}{area under the curve}
\newtheorem{claim}{Proposition}

\Crefname{figure}{Fig.}{Figs.}
\Crefname{table}{Tab.}{Tabs.}
\Crefname{section}{Sec.}{Secs.}
\Crefname{proposition}{Prop.}{Props.}
\Crefname{equation}{Eqn.}{Eqns.}




\pdfinfo{
/TemplateVersion (IJCAI.2022.0)
}

\title{Raising the Bar in Graph-level Anomaly Detection}

\author{
Chen Qiu$^{1,2}$\and
Marius Kloft$^2$\and
Stephan Mandt$^3$\And
Maja Rudolph$^1$\\
\affiliations
$^1$Bosch Center for Artificial Intelligence\\
$^2$TU Kaiserslautern, Germany\\
$^3$University of California, Irvine, USA\\
\emails
chen.qiu@de.bosch.com,
kloft@cs.uni-kl.de,
mandt@uci.edu,
maja.rudolph@us.bosch.com
}

\begin{document}

\maketitle

\begin{abstract}
Graph-level anomaly detection has become a critical topic in diverse areas, such as financial fraud detection and detecting anomalous activities in social networks. While most research has focused on anomaly detection for visual data such as images, where high detection accuracies have been obtained, existing deep learning approaches for graphs currently show considerably worse performance. This paper raises the bar on graph-level anomaly detection, i.e., the task of detecting abnormal graphs in a set of graphs. 
By drawing on ideas from self-supervised learning and transformation learning, we present a new deep learning approach that significantly improves existing deep one-class approaches by fixing some of their known problems, including hypersphere collapse and performance flip. Experiments on nine real-world data sets involving nine techniques reveal that our method achieves an average performance improvement of $11.8\%$ AUC compared to the best existing approach.
\end{abstract}
\section{Introduction}
\glsresetall
\Gls{ad} is an important tool for scanning systems for unknown threats. 
Many web-based systems are best represented by graphs and there has been work on detecting anomalous nodes and edges within a graph \citep{akoglu2015graph}. 
However, in many applications, it is much more relevant to ask whether an entire graph is abnormal.

For example, in a financial network with nodes representing individuals, businesses, and banks and with edges representing transactions, it might be difficult to detect certain criminal activity by looking at individual nodes and edges 
\citep{jullum2020detecting}. 
Clever criminals can hide their intentions behind innocent-looking transactions. 
However, the entire network associated with a money-laundering scheme is harder to obfuscate and will still exhibit properties of criminal activity. 
By using tools for graph-level \gls{ad}, we might be able to detect an entire criminal network rather than flag individual entities.

Unfortunately, there has been limited success in adapting advances in deep anomaly detection to graph-level \gls{ad} \citep{zhao2021using}. Our work addresses this shortcoming. 
For graph-level \gls{ad}, we assume to have access to a large dataset of typical graphs, such as a dataset of communities in a social network or a dataset of snapshots of a financial network. 
All graphs in the training data are considered ``normal''. The goal is to use the data to learn an anomaly scoring function which can then be used to score how likely it is that a new graph is either normal or abnormal. Importantly, the term graph-level \gls{ad} refers to detecting \emph{entire} abnormal graphs, rather than localizing anomalies within graphs.




Recently there has been a trend of using deep learning in AD on images \citep{golan2018deep} or tabular and sequential data \citep{qiu2021neural}. However, there has been limited research on deep \gls{ad} for graphs. This may seem surprising since it appears straightforward to adopt a deep \gls{ad} method for tabular data into one for graphs by defining an appropriate feature map. Yet, \citet{zhao2021using} found that the resulting methods often perform close to random, and so far, attempts to adopt modern \gls{ad} methods (based on deep learning) to graph-level \gls{ad} have not been successful. 

\begin{figure}[t]
\centering
\includegraphics[width=0.83\linewidth]{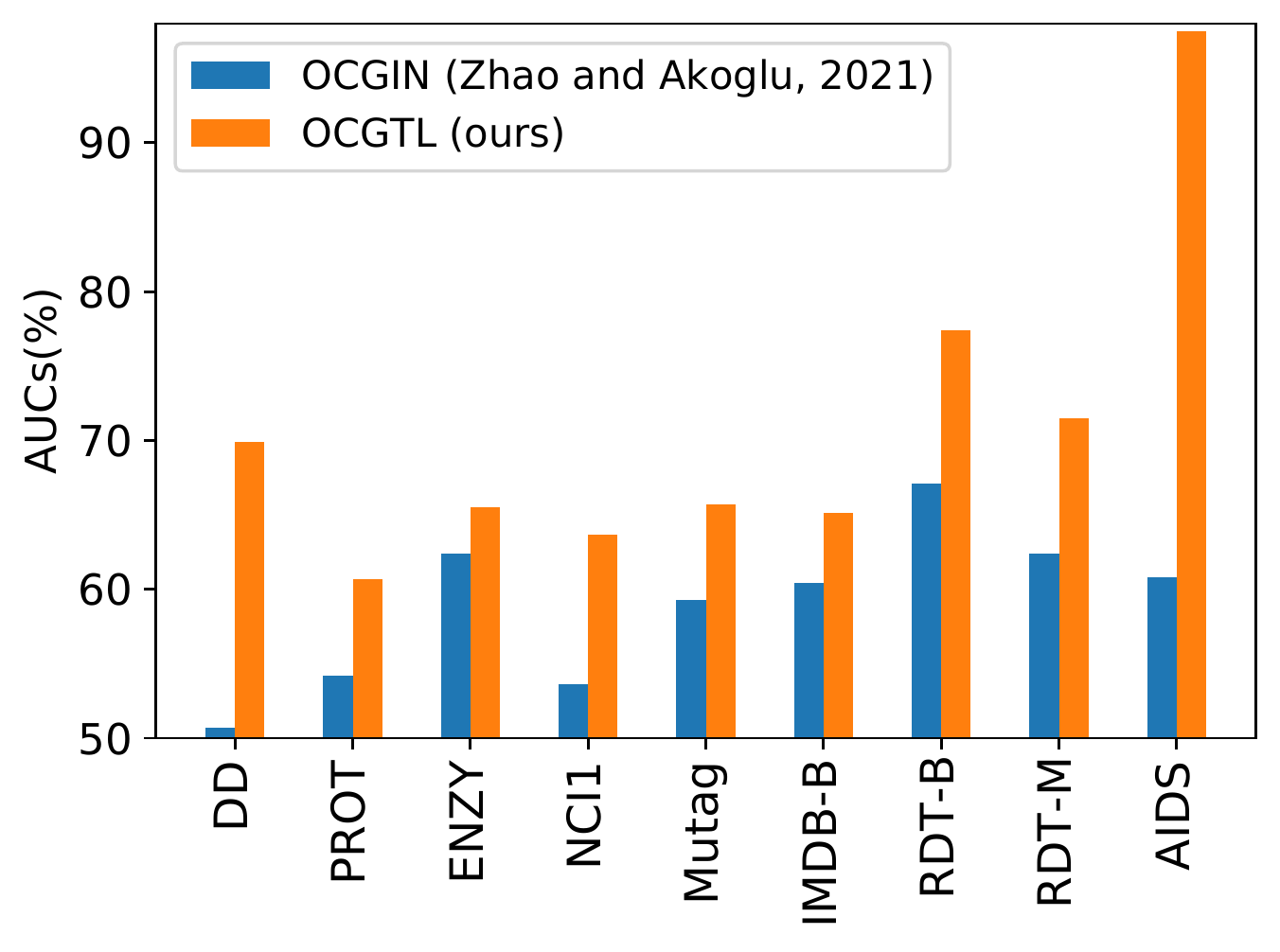}
\caption{AUC comparison between  \gls{ocgtl} (ours) and \Gls{ocgin} \citep{zhao2021using} on several datasets  from \Cref{sec:empirical}. \gls{ocgtl} improves anomaly detection accuracy.}
\label{fig:bar}
\end{figure}
We develop \glsfirst{ocgtl}, a new model for graph level \gls{ad} that combines deep \gls{occ} and self-supervision.  Fig.~\ref{fig:overview} provides a sketch of the approach. The \gls{ocgtl} architecture consists of $K+1$ \glspl{gnn} that are jointly trained on two complementary deep \gls{ad} losses.
In \Cref{sec:theory} we prove that this new combined loss mitigates known issues of previous deep \gls{ad} approaches \citep{ruff2018deep,zhao2021using}. 

\Cref{fig:bar} shows that the approach significantly raises the bar in graph-level AD performance. 
%
%
In an extensive empirical study, we evaluate nine methods on nine real-world datasets. Our work brings deep \gls{ad} on graphs up to speed with other domains, contributes a completely new method (\gls{ocgtl}), and paves the way for future progress.
\begin{figure}[t]
	\centering
	\begin{tabular}{c}
		\includegraphics[width=0.9\linewidth]{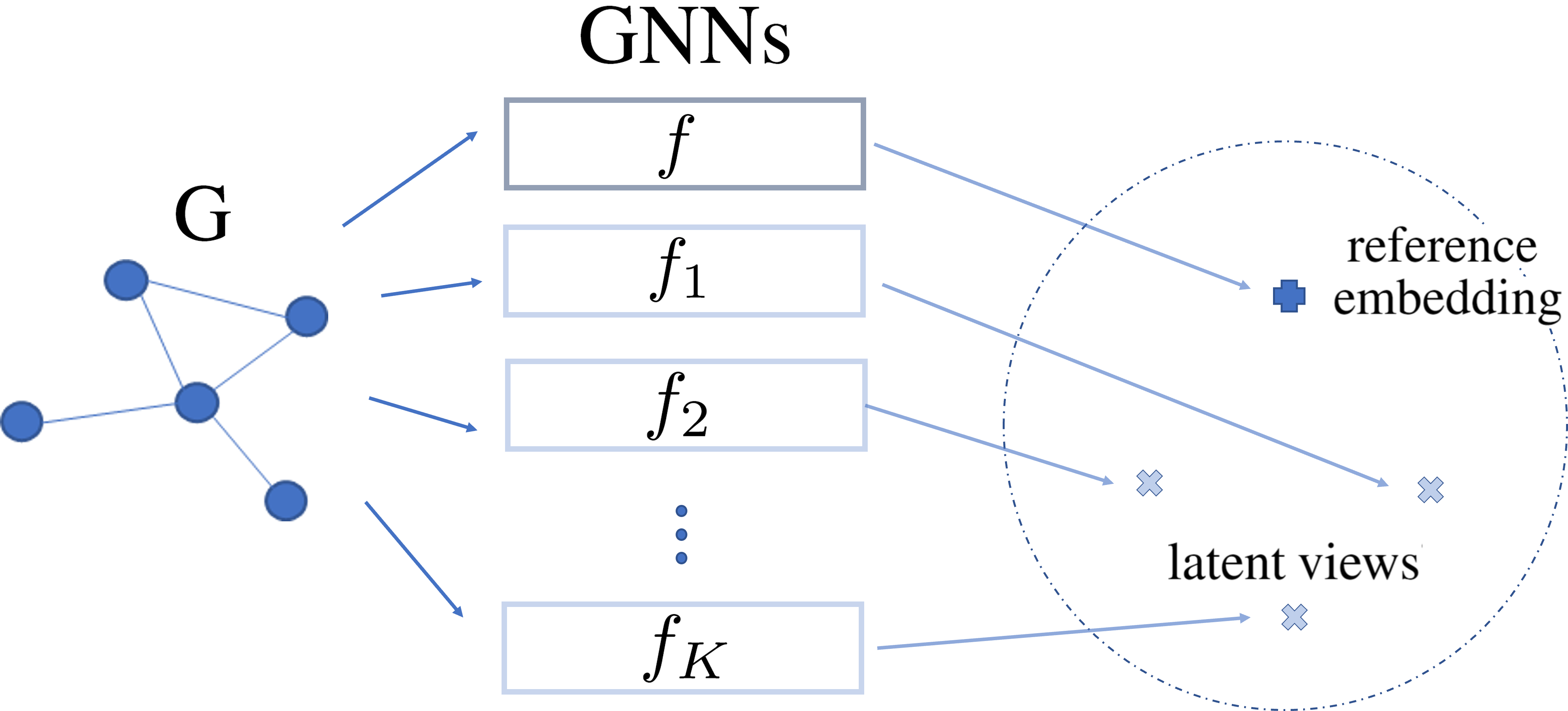}
			\end{tabular}
	\caption{Sketch of the \Gls{ocgtl} procedure. Given a graph (left), we use a set of \glspl{gnn} to embed the latter into a latent space. The different \glspl{gnn} embeddings are trained to be both diverse while also being close to a so-called 'reference embedding'. See \Cref{sec:methodsec} for more details.} 
	\label{fig:overview}
\end{figure}

%


In summary, our main contributions are as follows: 
\begin{itemize}
    \item We develop \gls{ocgtl}, a novel end-to-end method for graph-level \gls{ad}, which combines the advantages of \gls{occ} and neural transformation learning.\footnote{Code is available at \\ \url{https://github.com/boschresearch/GraphLevel-AnomalyDetection}} 
    \item In Sec.~\ref{sec:theory}, we prove theoretically that \gls{ocgtl} is not susceptible to the useless trivial solutions that are optimal under the objective of \citet{zhao2021using}.  
    \item In Sec.~\ref{sec:empirical}, we study nine methods (four newly developed) on nine real-world graph datasets, ranging from social networks to bioinformatics datasets. We improve the architectures of existing deep approaches to graph-level \gls{ad}. Yet, \gls{ocgtl} significantly raises the anomaly detection accuracy over previous work.
\end{itemize}





\section{Related Work}

\paragraph{Deep Anomaly Detection.}
Deep \gls{ad} has received massive attention in various domains \citep{ruff2021unifying}.
Related work on deep \gls{ad} can be summarized in the following classes. 
Deep autoencoder variants \citep{zong2018deep} detect anomalies based on the reconstruction error.
Deep one-class networks \citep{ruff2018deep} are trained on an \gls{occ} objective to map normal data close to a center in an embedding space. They score anomalies based on their distance to the center. 
Deep generative models detect anomalies based on density estimation \citep{zhang2021understanding} or using the discriminator of generative adversarial networks \citep{deecke2018image}.
Self-supervised \gls{ad} methods have achieved great success on images \citep{ruff2021unifying}. They rely on a self-supervision task for training and anomaly scoring. The self-supervision tasks often require data augmentation, e.g., for transformation prediction \citep{golan2018deep}. Another self-supervised paradigm using data augmentations is contrastive learning \citep{chen2020simple}. In contrastive learning for \gls{ad}, transformed (e.g. rotated) images are treated as negative samples \citep{sohn2020learning}. For data types beyond images, handcrafting transformations is challenging. Data-driven neural transformations \citep{qiu2021neural} have shown success in \gls{ad}. 

\paragraph{Graph Anomaly Detection.}
Finding abnormal nodes or edges within a large graph is widely studied~\citep{akoglu2015graph}. Deep learning based methods for node- and edge-level \gls{ad}
have been successfully applied to both static
\citep{ding2019deep,ding2020inductive} and dynamic graphs \citep{yoon2019fast,zheng2019addgraph}. In contrast, deep learning for graph-level \gls{ad}, the problem we study in this paper, has received less attention. \citet{zhao2021using} first explore how to extend deep \gls{occ} for graph-level \gls{ad} and develop \glsfirst{ocgin}. They also introduce two-stage graph-level \gls{ad} frameworks with either graph embedding models or graph kernels. However, all these attempts have not yet produced a solid baseline for graph-level \gls{ad}. \citet{zhao2021using} report that these methods suffer from ``performance flip'', which is an issue where the model performs worse than random on at least one experimental variant. We study how to overcome it.

\paragraph{Deep Learning for Graphs.}
\Glspl{gnn} automate feature extraction from graphs and play an important role in graph classification and representation learning. 
Graph Convolution Networks \citep{kipf2016semi} learn a node representation by aggregating the representations of the node's neighbors. 
\citet{errica2020fair} provide a fair comparison of various \glspl{gnn} for graph classification.
An outstanding example is \gls{gin} \citep{xu2018powerful}, which is proven to be as powerful as the Weisfeiler-Lehman graph isomorphism test and is the architecture we use for all \gls{gnn}-based \gls{ad} methods we study in this paper. 

\section{One-Class Graph Transformation Learning for Anomaly Detection}
\label{sec:methodsec}
We propose a new method for graph-level \gls{ad}. \Glsfirst{ocgtl} combines the complementary strengths of deep \gls{occ} \citep{ruff2018deep,zhao2021using} and self-supervised \gls{ad} with learnable transformations \citep{qiu2021neural}.

Our new self-supervised approach is designed to overcome known issues of deep \gls{occ}.
The deep \gls{occ} objective is prone to a trivial solution called {\em hypersphere collapse}.
The deep one-class objective encourages all the graph embeddings of the graphs in the training data to concentrate within a hypersphere. This task can be solved perfectly when the feature extractor learns to map all inputs to the center of the hypersphere.
Our model provably overcomes hypersphere collapse by regularizing the one-class terms in the objective with a transformation learning term. The resulting model is more flexible (for example, the hypersphere centers can be treated as trainable parameters) and training is more robust, despite the added flexibility.

\citet{zhao2021using} have developed the first deep one-class approach to graph-level \gls{ad}. In their paper, they report an additional, practical difficulty in graph-level \gls{ad} which they call the {\em performance flip issue}. In many of their experiments, their trained model (\gls{ocgin}) systematically confuses anomalies with normal samples.
The goal of this work is to overcome both hypersphere collapse and performance flip. 

Our model consists of an ensemble of \glspl{gnn}. One of them -- the reference feature extractor -- produces a reference embedding of its input graph. The other \gls{gnn} feature extractors produce alternative ``latent views'' of the graph.
The objective of our approach has a one-class term and a transformation learning term. The one-class term aims at concentrating all the latent views within a hyper-sphere in the embedding space. Transformation learning has the competing objective to make each view predictive of the reference embedding. It encourages the latent views to be diverse yet semantically meaningful. 
By counteracting the one-class term in this manner, hypersphere collapse can be provably avoided.

The tension that arises from satisfying both aspects of the objective has further advantages.
In particular, it leads to a  harder self-supervision task, which in turn leads to better anomaly detection performance. When the training objective is difficult to satisfy, the trained model has to be more sensitive to typical salient features of normal data. New graphs which do not exhibit these features incur a higher loss and are then more easily detected as anomalies.  Also, the two loss contributions focus on different notions of distance between the graph embeddings. The one-class term is based on Euclidean distances, while the transformation learning loss is based on angles between embeddings. With the combined loss as the anomaly score, our method is sensitive to abnormal embedding configurations both in terms of angles between the latent views and in terms of Euclidean distances.

In this section, we first introduce \gls{ocgtl} and then detail its main ingredients, including self-supervised \gls{ad} with learnable transformations, deep \gls{occ}, and feature extraction with \glspl{gnn}. We then present the theory behind \gls{ocgtl}.
\subsection{Proposed Method - OCGTL}
\label{sec:OCGTL}
\gls{ocgtl} combines the best of \gls{occ} and neural transformation learning. 
The \gls{ocgtl} architecture consists of a reference feature extractor $f$ and $K$ additional feature extractors $f_k$ ($k=1,\cdots, K$), which are trained jointly as illustrated in \Cref{fig:overview}. Each of the feature extractors is a parameterized function (e.g. \gls{gnn}) which takes as input an attributed graph  $\textrm{G} = \{\mathcal{V},\mathcal{E},\mathcal{X}\}$, with vertex set $\mathcal{V}$, edges $\mathcal{E}$, and node features (attributes) $ \mathcal{X} = \{ x_v | v \in \mathcal{V} \}$ and maps it into an embedding space $\mathcal{Z}$. These $K+1$ feature extractors are trained jointly on the \gls{ocgtl} loss, $\mathcal{L}_{\textrm{OCGTL}}= \mathbb{E}_\textrm{G}\left[\mathcal{L}_{\textrm{OCGTL}}(\textrm{G}) \right]$. Each graph in the training data contributes two terms to the loss,  
\begin{align}
\label{eqn:ocgtl}
    \mathcal{L}_{\textrm{OCGTL}}(\textrm{G}) =  \mathcal{L}_{\textrm{OCC}}(\textrm{G})+ \mathcal{L}_{\textrm{GTL}}(\textrm{G}).
\end{align}
The first term, $\mathcal{L}_{\textrm{OCC}}(\textrm{G})$, is a one-class term; it encourages all the embeddings to be as close as possible to the same point $\theta \in \mathcal{Z}$. The second term, $\mathcal{L}_{\textrm{GTL}}$, enforces each \gls{gnn}'s embeddings to be diverse and semantically meaningful representations of the input graph $\textrm{G}$. 

The two terms are presented in detail below.

\subsubsection{The Graph Transformation Learning Term}
\label{sec:AUG}
Neural transformation learning \citep{qiu2021neural} is a self-supervised training objective for deep \gls{ad} which has seen success on time series and tabular data. Here we generalize the training objective of \citet{qiu2021neural} (by dropping their parameter sharing constraint) and adapt it to graphs.

For a graph $\textrm{G}$, the loss of graph transformation learning encourages the embeddings of each \gls{gnn}, $f_k(\textrm{G})$, to be similar to the embedding of the reference \gls{gnn}, $f(\textrm{G})$, while being dissimilar from each other. Consequently, each \gls{gnn} $f_k$ is able to extract graph-level features to produce a different view of $\textrm{G}$. The contribution of each graph to the objective is
\begin{align}
\label{eqn:GTL_loss}
 & \mathcal{L}_{\textrm{GTL}}(\textrm{G})  = -\sum_{k=1}^K \log \frac{c_k}{C_k}\\\nonumber
 \text{with} \qquad c_k & =  \exp\left(\frac{1}{\tau} \mathrm{sim}(f_k(\textrm{G}), f(\textrm{G}))\right), \\\nonumber  C_k & = c_k +\sum_{l\neq k}^K \exp\left(\frac{1}{\tau}\mathrm{sim}(f_k(\textrm{G}), f_l(\textrm{G}))\right),
\end{align}
where $\tau$ denotes a temperature parameter. The similarity here is defined as the cosine similarity $\mathrm{sim}(z,z') := z^T z'/ \|z\| \|z'\|$. 
Note that the above loss is more general than the one proposed in \citet{qiu2021neural} as it omits a parameter sharing constraint between transformations.
This choice is inspired by the observation in \citet{you2020graph} that different graph categories prefer different types of transformations. 

\subsubsection{The One-Class Term}
\label{sec:OCC}
\Glsfirst{occ} is a popular paradigm for \gls{ad} \citep{noumir2012simple}. The idea is to map data into a minimal hypersphere encompassing all normal training data. Data points outside the boundary are considered anomalous. 
The contribution of each graph $\textrm{G}$ to our \gls{occ} objective is
\begin{align}
\label{eqn:oc_loss}
 \mathcal{L}_{\textrm{OCC}}(\textrm{G}) =
\sum_{k=1}^K \|(f_k(\textrm{G})-\theta)\|_2
\end{align}
The loss function penalizes the distance of the graph $\textrm{G}$ to the center $\theta$ which we treat as a trainable parameter. In previous deep \gls{occ} approaches, the center $\theta$ has to be a fixed hyperparameter to avoid trivial solutions to \Cref{eqn:oc_loss}.

\subsubsection{Feature Extraction with GNNs}
\label{sec:GNN}
For graph data, parametrizing the feature extractors $f$ and $f_1, \cdots, f_K$ by \glspl{gnn} is advantageous. At each layer $l$, a \gls{gnn} maintains node representation vectors $h_v^{(l)}$ for each node $v$.
The representation is computed based on the previous layer's representations of $v$ and its neighbors $\mathcal{N}(v)$,
\begin{align}
\label{eqn:gnn_layer}
    h_v^{(l)} = \textrm{GNN}^{(l)}\left(h_v^{(l-1)},  h_u^{(l-1)} \mid u\in \mathcal{N}(v)\right)\,.
\end{align}
Each layer's node representations are then combined into layer-specific graph representations,
\begin{align}
\label{eqn:readout}
     h_{\textrm{G}}^{(l)} =\textrm{\small READOUT}^{(l)}\left(h_v^{(l)} \mid v\in G \right),
\end{align}
which are concatenated into graph-level representations,
\begin{align}
\label{eqn:concat}
    h_{\textrm{G}}  =  \textrm{\small CONCAT}\left(h_{\textrm{G}}^{(l)} \mid l = 1,...,L\right).
\end{align}
This concatenation introduces information from various hierarchical levels \citep{xu2018representation} into the graph representation.
Our empirical study in \Cref{sec:empirical} shows that the choice of the readout function (which determines how the node representations are aggregated into graph representations) is particularly important to detect anomalies reliably.

\subsubsection{Anomaly Scoring with OCGTL}
\Gls{ocgtl} is an end-to-end methods for graph-level \gls{ad}. During training the \glspl{gnn} are trained on \Cref{eqn:ocgtl}. 
During test, $\mathcal{L}_{\textrm{OCGTL}}$ (\Cref{eqn:ocgtl}) is used directly as the score function for detecting anomalous graphs. A low loss on a test sample means that the graph is likely normal, whereas a high loss is indicative of an anomaly. One advantage of \gls{ocgtl} is that its loss makes it more sensitive to different types of anomalies by considering both angles between embeddings and Euclidean distances. In contrast,  \gls{occ}-based methods typically rely on the Euclidean distance only.

Another advantage of \gls{ocgtl} over \gls{occ}-based approaches is that its training is more robust and the \gls{ad} model can be more flexible. We prove this next.

\subsection{A Theory of OCGTL}
\label{sec:theory}
A known difficulty for training \gls{occ}-based deep anomaly detectors (such as deep SVDD and \gls{ocgin})
is \emph{hypersphere collapse} \citep{ruff2018deep}. Hypersphere collapse is a trivial optimum of the training objective \begin{align}
\label{eqn:svdd_loss}
    \mathcal{L}_{\textrm{\citep{ruff2018deep}}}(\textrm{G}) =  ||f(\textrm{G})-\theta||_2^2 \,,
\end{align}
which occurs when the feature extractor $f$ maps all inputs exactly into the center $\theta$. The hypersphere then has a radius of zero, and \gls{ad} becomes impossible. 
\citet{ruff2018deep} recommend fixing $\theta$ and avoiding bias terms for $f$ and show good results in practice.
However, there is no guarantee that a trivial solution can be avoided under any architecture for $f$. Here we prove that \gls{ocgtl} overcomes this.

We first show that our one-class term (\Cref{eqn:oc_loss}) is also prone to hypersphere collapse when all the feature extractors are constant. However, we then show that this trivial solution for minimizing \Cref{eqn:oc_loss}  is not optimal under the \gls{ocgtl} loss. Our method provably avoids hypersphere collapse even when the center $\theta$ is a trainable parameter. This result makes \gls{ocgtl} the first deep one-class approach where the center can be trained.

\begin{claim}
\label{c1}
 The constant feature extractors, $f_k(\textrm{G}) = \theta$ for all $k$ and all inputs $\textrm{G}$, minimize $\mathcal{L}_{\textrm{OCC}}$ (\Cref{eqn:oc_loss}).
\end{claim}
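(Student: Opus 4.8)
The plan is to verify that the stated constant solution attains the global infimum of $\mathcal{L}_{\textrm{OCC}}$, which reduces to two elementary observations. First I would record a lower bound: since $\mathcal{L}_{\textrm{OCC}} = \mathbb{E}_{\textrm{G}}\!\left[\mathcal{L}_{\textrm{OCC}}(\textrm{G})\right]$ and each per-graph contribution $\mathcal{L}_{\textrm{OCC}}(\textrm{G}) = \sum_{k=1}^{K}\|f_k(\textrm{G})-\theta\|_2$ is a finite sum of Euclidean norms, we have $\mathcal{L}_{\textrm{OCC}}(\textrm{G}) \ge 0$ for every graph $\textrm{G}$, and hence $\mathcal{L}_{\textrm{OCC}} \ge 0$ regardless of the choice of feature extractors $f_1,\dots,f_K$ and of the center $\theta$.

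Second, I would substitute the candidate solution $f_k(\textrm{G}) = \theta$ for all $k$ and all $\textrm{G}$: every summand becomes $\|\theta-\theta\|_2 = 0$, so $\mathcal{L}_{\textrm{OCC}}(\textrm{G}) = 0$ for every $\textrm{G}$, and therefore $\mathcal{L}_{\textrm{OCC}} = 0$. Since the value $0$ is a lower bound that is actually achieved by these feature extractors, they form a global minimizer, which is exactly the claim. This is precisely the ``hypersphere collapse'' degeneracy: the enclosing sphere has radius zero, and no test graph can ever be separated from the training graphs by \Cref{eqn:oc_loss} alone.

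The only remaining point worth a sentence is realizability, i.e.\ that constant maps lie in the hypothesis class used for the $f_k$. This is immediate for the \gls{gnn} architecture of \Cref{sec:GNN} as soon as bias terms are allowed (and, trivially, for the abstract statement, which quantifies over arbitrary parameterized functions): zero out the aggregation weights in \Cref{eqn:gnn_layer}, route a constant through the bias so that the node representations, the layerwise readouts, and the concatenation in \Cref{eqn:concat} are all constant in $\textrm{G}$, and finally translate the output to $\theta$. I expect no real obstacle here; the only care needed is to keep the quantifiers explicit — whether the minimum is taken over the feature extractors with $\theta$ fixed, or jointly over the feature extractors and $\theta$, the argument above is unchanged — and to flag that this is exactly the trivial optimum that the transformation-learning term in \Cref{eqn:ocgtl} is designed to exclude.
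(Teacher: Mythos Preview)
Your proposal is correct and follows essentially the same two-step argument as the paper: establish the trivial lower bound $\mathcal{L}_{\textrm{OCC}}\ge 0$ from nonnegativity of the norm, then observe that the constant extractors $f_k(\textrm{G})=\theta$ attain it. Your added remarks on the expectation over $\textrm{G}$, on realizability within the \gls{gnn} class, and on the quantifier over $\theta$ go beyond what the paper spells out, but they do not change the route.
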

\begin{proof}
$0 \leq \mathcal{L}_{\textrm{OCC}}$ is the squared $\ell_2$ norm of the distance between the embedding of $\textrm{G}$ and the center $\theta$. Plugging in $f_k(\textrm{G}) = \theta$ attains the minimum $0$.
\end{proof}
In contrast, regularization with transformation learning can avoid hypersphere collapse. Under the constant encoder, all the latent views are the same and hence at least as close to each other as to the reference embedding, leading to $\mathcal{L}_{\textrm{GTL}} \geq K \log K$. However, the transformation learning objective aims at making the views predictive of the reference embeddings, in which case $\mathcal{L}_{\textrm{GTL}} < K \log K$. The following proposition shows that if there is a parameter setting which achieves this, the constant feature extractors do not minimize the \gls{ocgtl} loss which proves that hypersphere collapse can be avoided.

\begin{claim}
\label{c2}
If there exists a parameter setting such that $\mathcal{L}_{\textrm{GTL}} < K \log K$ on the training data, then the constant feature extractors $f_k(\textrm{G}) = \theta$ do not minimize the combined loss $\mathcal{L}_{\textrm{OCGTL}}$ (\Cref{eqn:ocgtl}).
\end{claim}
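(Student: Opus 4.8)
The plan is to exhibit a concrete parameter setting whose (expected) combined loss is strictly below the combined loss of the constant solution; the setting will be a rescaled copy of the hypothesized ``good'' setting. Throughout, recall that $\mathcal{L}_{\textrm{OCGTL}}$ without an argument denotes the average over the training data, and likewise for $\mathcal{L}_{\textrm{GTL}}$ and $\mathcal{L}_{\textrm{OCC}}$.

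First I would evaluate $\mathcal{L}_{\textrm{OCGTL}}$ at the constant feature extractors. By \Cref{c1} the one-class term vanishes there. For the transformation-learning term, when $f_k(\textrm{G}) = \theta$ for all $k$ every pair of views coincides, so each pairwise similarity $\mathrm{sim}(f_k(\textrm{G}), f_l(\textrm{G}))$ equals $1$, the maximum of the cosine similarity, and in particular is at least $\mathrm{sim}(f_k(\textrm{G}), f(\textrm{G}))$ regardless of what the reference extractor $f$ does. Substituting into \Cref{eqn:GTL_loss} and using that $x \mapsto x/(x+b)$ is increasing for $b>0$ gives $c_k/C_k \le 1/K$ for every $k$, hence $\mathcal{L}_{\textrm{GTL}}(\textrm{G}) \ge K\log K$ for every graph $\textrm{G}$, and therefore the expected combined loss at the constant solution is at least $K\log K$.

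Next I would use the scale invariance of the GTL term. Let $f^{\star}, f_1^{\star},\dots,f_K^{\star}$ be the feature extractors of the hypothesized setting, for which $\mathcal{L}_{\textrm{GTL}} = K\log K - \delta$ with $\delta > 0$. For $\epsilon \in (0,1]$ consider the parameter setting that replaces each $f_k^{\star}$ by $\epsilon f_k^{\star}$ (admissible because a \gls{gnn} feature extractor can rescale its output, e.g.\ by scaling the weights of its last layer), leaves $f^{\star}$ unchanged, and sets the trainable center to $\theta = 0$. Since the cosine similarity is invariant under multiplying either argument by a positive scalar, the GTL term is unchanged, so it still averages to $K\log K - \delta$. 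The one-class term becomes $\mathcal{L}_{\textrm{OCC}}(\textrm{G}) = \epsilon\sum_{k=1}^{K}\|f_k^{\star}(\textrm{G})\|_2$, whose average over the finite training set equals $\epsilon M$ for some finite $M \ge 0$. Taking $\epsilon < \delta/(M+1)$ yields $\mathcal{L}_{\textrm{OCGTL}} < K\log K$ at this setting, strictly below the value at the constant solution; hence the constant feature extractors do not minimize $\mathcal{L}_{\textrm{OCGTL}}$.

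I expect the only real obstacle to be the lower bound $\mathcal{L}_{\textrm{GTL}} \ge K\log K$ at the constant solution --- specifically, checking that allowing the reference extractor $f$ to be arbitrary (rather than also constant) cannot push this term below $K\log K$, which is exactly the monotonicity observation above: each numerator $c_k$ is at most $\exp(1/\tau)$ while every off-diagonal term in $C_k$ equals $\exp(1/\tau)$. The remainder is bookkeeping once one notes that GTL depends only on the directions of the embeddings while OCC depends only on their magnitudes relative to $\theta$, so contracting the embeddings towards the origin drives OCC to $0$ without affecting GTL; the one point worth stating explicitly in the final write-up is that the hypothesis class is closed under output rescaling, so the rescaled setting is indeed admissible.
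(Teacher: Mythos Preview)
Your proposal is correct and follows essentially the same route as the paper: evaluate the combined loss at the constant solution to get the lower bound $K\log K$, then take the hypothesized ``good'' setting, set $\theta=0$, and rescale the $f_k$ outputs so that $\mathcal{L}_{\textrm{OCC}}$ becomes small while $\mathcal{L}_{\textrm{GTL}}$ stays fixed by scale invariance of the cosine similarity. Your write-up is in fact more careful than the paper's on two points the paper glosses over --- you explicitly justify $c_k/C_k\le 1/K$ via the monotonicity of $x\mapsto x/(x+b)$ even when the reference extractor $f$ is arbitrary, and you flag the admissibility assumption that the hypothesis class is closed under output rescaling --- so there is nothing to fix.
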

\begin{proof}
For constant feature extractors $f_k(\textrm{G}) = \theta$, $\mathcal{L}_{\textrm{OCGTL}} = \mathcal{L}_{\textrm{GTL}} \geq K\log K$, where $K$ is the number of transformations and $K\log K$ is the negative entropy of randomly guessing the reference embedding.
Assume there is a constellation of the model parameters s.t. $\mathcal{L}_{\textrm{GTL}} < K\log K$. Since $\theta$ is trainable, we can set it to be the origin. The loss of the optimal solution is at least as good as the loss with $\theta=0$. 
Set $\epsilon =  K\log K - \mathcal{L}_{\textrm{GTL}} $. The encoders can be manipulated such that their outputs are rescaled and as a result all the embeddings have norm $||f_k(\textrm{G})||_2 < \epsilon / K$.
As the norm of the embeddings changes, $\mathcal{L}_{\textrm{GTL}}$ remains unchanged since the cosine similarity is not sensitive to the norm of the embeddings. By plugging this into \Cref{eqn:ocgtl} we get 
$\mathcal{L}_{\textrm{OCGTL}} = \sum_{k=1}^K ||f_k(\textrm{G})||_2 + \mathcal{L}_{\textrm{GTL}} < K\log K$,
which is better than the performance of the best set of constant encoders.
\end{proof}

Props. 1 and 2 demonstrate that our method is the first deep one-class method not prone to hypersphere collapse. The assumption of Prop. 2, that $\mathcal{L}_{\textrm{GTL}}< K \log K$ can be tested in practice by training \gls{gtl} and evaluating the predictive entropy on the training data. In all scenarios we worked with $\mathcal{L}_{\textrm{GTL}} << K \log K$ after training.

\subsection{Newly Developed Baselines}
\label{sec:methods_summary}
The main contribution of our work is \gls{ocgtl}.
To study the effectiveness of \gls{ocgtl} we implement the following graph-level \gls{ad} methods as ablations. 
These methods have not been studied on graphs before, so their implementation is also one of our contributions that paves the way for future progress.

\paragraph{\Gls{ocpool}.} As a shallow method, \gls{ocpool} uses pooling to construct a graph representation: 
\begin{align}
    h_{\textrm{G}} =\textrm{POOLING}\left(x_v \mid v\in \textrm{G}\right).
\end{align}
This feature extractor does not have parameters and hence requires no training.
Anomalies can be detected by training an \gls{ocsvm} \citep{manevitz2001one} on these features.
This novel approach for graph-level \gls{ad} is a simple baseline and achieves solid results in our empirical study (even though it does not use the edge sets $\mathcal{E}$ of the graphs).
Another reason for studying \gls{ocpool}, is that it helps us understand which pooling function might work best as readout function (\Cref{eqn:readout}) for \gls{gnn}-based \gls{ad} methods.

\paragraph{\Gls{gtp}.} \Gls{gtp}
is an end-to-end self-supervised detection method based on transformation prediction. 
It trains a classifier $f$ to predict which transformation has been applied to a samples and uses the cross-entropy loss to score anomalies.
We implement \gls{gtp} with six graph transformations (node dropping, edge adding, edge dropping, attribute masking, subgraph, and identity transformation) originally designed in \citet{you2020graph}.

\paragraph{\Glsfirst{gtl}.} \Gls{gtl}
is an end-to-end self-supervised detection method using neural transformations \citep{qiu2021neural}. $K$ \glspl{gnn}, $f_k$  for $k=1,\cdots, K$ in addition to the reference feature extractor $f$ are trained on $\mathcal{L}_{\textrm{GTL}}$ (\Cref{eqn:GTL_loss}). The loss is used directly to score anomalies. While this method works well in practice, it is not sensitive to the norm of the graph embeddings in \Cref{eqn:GTL_loss}. The normalization step in computing the cosine similarity makes mean and add pooling equivalent when aggregating the graph representations.
This may put \gls{gtl} at a disadvantage compared to the other methods, which profit from add pooling.

\section{Experiments}
\label{sec:empirical}
This section details our empirical study. We benchmark nine algorithms on nine real-world graph classification datasets from different domains using various evaluation measures. First, we describe the datasets and how the \gls{ad} benchmark is set up. Second, we present all methods we compare, including baselines and their implementation details. Third, the evaluation results are presented and analyzed. In summary, \gls{ocgtl} achieves the best performance on real-world datasets from various domains and raises the anomaly detection accuracy significantly ($+11.8\%$ in terms of AUC on average compared to \gls{ocgin} of \citet{zhao2021using}). 
Finally, we present our findings about preferable design choices that are also beneficial for other deep methods in graph-level \gls{ad}. 

\begin{table*}[ht!]\small
	\centering
	\begin{tabular}{cc|ccc|ccc|c}
 & &DD &  PROT & ENZY & NCI1 & AIDS &  Mutag & Rank\\

		\hline
\multirow{5}{*}{\rotatebox{90}{Baselines} \rotatebox{90}{(prev. work)}}&\acrshort{wlk}& 50.2$\pm$0.3$^*$	& 49.7$\pm$0.5$^*$	& 52.1$\pm$2.0$^*$	& 49.6$\pm$0.4$^*$	& 51.3$\pm$0.8$^*$	& 52.3$\pm$0.6 &8.1$\pm$1.7	\\
&\acrshort{pk}& 51.2$\pm$2.3$^*$	& 50.8$\pm$1.5$^*$	& 51.3$\pm$1.2$^*$	& 51.4$\pm$1.7$^*$	& 59.5$\pm$2.3$^*$	& 52.5$\pm$1.6	& 8.4$\pm$1.1\\
&\acrshort{g2v}& 49.5$\pm$2.2$^*$	& 53.2$\pm$3.0$^*$	& 52.0$\pm$3.9$^*$	& 50.5$\pm$0.7$^*$	& 48.4$\pm$0.8$^*$	& 50.2$\pm$0.7$^*$	&8.9$\pm$1.0\\	
&\acrshort{fgsd}& 66.0$\pm$2.3	& 58.5$\pm$1.8$^*$	& 52.7$\pm$3.4$^*$	& 55.4$\pm$0.7	& 91.6$\pm$3.7	& 51.3$\pm$0.8$^*$	& 5.3$\pm$2.9\\
&\acrshort{ocgin} & 50.7$\pm$1.2$^*$	& 54.2$\pm$1.2$^*$	& 62.4$\pm$2.7	& 53.6$\pm$1.3$^*$	& 60.8$\pm$2.2$^*$	& 59.3$\pm$1.4 & 5.4$\pm$1.7\\
\hline
\multirow{4}{*}{\rotatebox{90}{Ablations } \rotatebox{90}{ (ours)}}&OCGIN$^\dagger$ & 61.4$\pm$1.6	& 57.2$\pm$2.3	& 63.5$\pm$3.9	& 62.2$\pm$1.3	& \textbf{97.5$\pm$2.0}	& 61.5$\pm$1.8$^*$	& 2.7$\pm$0.9\\
&OCPool& 61.1$\pm$3.3$^*$	& \textbf{61.9$\pm$2.1}	& 53.1$\pm$2.9$^*$	& 57.0$\pm$1.3	& \textbf{97.6$\pm$1.4}	& 53.2$\pm$0.5$^*$	& 4.3$\pm$2.2\\
&GTP& 54.2$\pm$1.9	& \textbf{61.9$\pm$2.9}	& 55.0$\pm$2.0$^*$	& 55.3$\pm$1.2$^*$	& 77.2$\pm$2.9	& 54.7$\pm$1.8$^*$ &4.8$\pm$1.5\\
&GTL& 51.7$\pm$0.9$^*$	& 56.2$\pm$2.5$^*$	& 60.4$\pm$1.6	& 59.8$\pm$1.0$^*$	& 67.8$\pm$3.3$^*$	& 61.8$\pm$1.0	& 3.8$\pm$1.7\\
\hline
ours&OCGTL& \textbf{69.9$\pm$2.6}	& 60.7$\pm$2.4	& \textbf{65.5$\pm$3.8}	& \textbf{63.7$\pm$1.2}	& \textbf{97.5$\pm$2.0}	& \textbf{65.7$\pm$2.1}	&1.4$\pm$0.7\\
        \hline
	\end{tabular}
		      \begin{tablenotes}
      \small
      \item  $^\dagger$ OCGIN is the original implementation from \citet{zhao2021using}, while OCGIN$^\dagger$ denotes our improved implementation with the same \gls{gin} architecture choices (add pooling etc.) as \gls{ocgtl}, \acrshort{gtp}, and \acrshort{gtl}.
    \end{tablenotes}
		\caption{Average AUCs ($\%$) with standard deviations of 9 methods on 6 of 9 datasets. (For the remaining 3 datasets, see \Cref{tab:social_network_results}.) The performance rank averaged on all nine datasets is provided in the last column. Results marked $^*$ perform worse than random on at least one experimental variant (performance flip). \Gls{ocgtl} outperforms the other methods and has no performance flip.}
	\label{tab:glad_results}
\end{table*}
\begin{table*}[ht!]
    \small
	\centering
	\begin{tabular}{c|cc|cc|cc|cc}
 & \multicolumn{2}{c|}{DD} &  \multicolumn{2}{c|}{PROT} & \multicolumn{2}{c|}{NCI1} & \multicolumn{2}{c}{AIDS}\\
Outlier class& 0 & 1 &0 & 1 & 0 & 1 & 0 & 1\\
		\hline
\gls{ocgin} & \textcolor{red}{26.3$\pm$2.7} & 75.2$\pm$3.4 &\textcolor{red}{42.5$\pm$4.4} & 65.9$\pm$4.5 & 64.4$\pm$2.5 & \textcolor{red}{42.9$\pm$3.0} & \textcolor{red}{26.0$\pm$3.9} & 95.5$\pm$2.2\\
\gls{ocgtl} (ours)& 66.8$\pm$4.6 & 73.0$\pm$2.9 & 63.2$\pm$5.4 & 58.1$\pm$6.1 & 71.2$\pm$3.0 & 56.2$\pm$2.5 & 99.3$\pm$0.9 & 95.7$\pm$3.6\\ 
        \hline
	\end{tabular}
	\caption{Average AUCs ($\%$) with standard deviations of \gls{ocgin} \citep{zhao2021using} and \gls{ocgtl} (ours) on both experimental variants of four datasets, where the performance flip is observed. The results that are worse than random are marked in red. \gls{ocgin} suffers from performance flip, while \gls{ocgtl} not.}
	\label{tab:detailed_results}
\end{table*}

\begin{table}[ht!]
\small
	\centering
	\begin{tabular}{cc|ccc}
 & & IMDB-B  & RDT-B & RDT-M \\

		\hline
\multirow{5}{*}{\rotatebox{90}{Baselines} \rotatebox{90}{(prev. work)}}&\acrshort{wlk}& 62.0$\pm$2.0	& 50.2$\pm$0.2$^*$	& 50.3$\pm$0.1$^*$ \\
&\acrshort{pk}& 53.5$\pm$2.0	&50.0	&50.0	\\
&\acrshort{g2v}& 54.3$\pm$1.6	& 51.5$\pm$0.6$^*$	& 50.0$\pm$0.2$^*$ \\	
&\acrshort{fgsd}& 57.1$\pm$1.8	&-	&-	\\
&\acrshort{ocgin} &  60.4$\pm$2.8	& 67.1$\pm$3.5	& 62.4$\pm$1.3 \\
\hline
\multirow{4}{*}{\rotatebox{90}{Ablations } \rotatebox{90}{ (ours)}}&OCGIN$^\dagger$ & 63.7$\pm$2.4	& 74.5$\pm$3.4	& 70.4$\pm$1.5	\\
&OCPool& 56.5$\pm$0.8	& 65.3$\pm$2.2	& 62.4$\pm$0.9	\\
&GTP& 57.6$\pm$1.1	& 64.4$\pm$2.2	& 62.4$\pm$1.3\\
&GTL&\textbf{65.2$\pm$1.9}	& 71.6$\pm$2.3	& 67.8$\pm$0.9\\
\hline
ours&OCGTL&  \textbf{65.1$\pm$1.8}	& \textbf{77.4$\pm$1.9}	& \textbf{71.5$\pm$1.1}\\
        \hline
	\end{tabular}
	\begin{tablenotes}
      \small
      \item  $^\dagger$ OCGIN is the original implementation from \citet{zhao2021using}, while OCGIN$^\dagger$ denotes our improved version.
    \end{tablenotes}
    	\caption{Average AUCs ($\%$) with standard deviations of nine methods on three datasets to complement \Cref{tab:glad_results}.}
	\label{tab:social_network_results}
	\vspace{-8pt}
\end{table}
\subsection{Datasets and Experimental Setting}
We benchmark nine methods on nine graph classification datasets that are representative of three domains. In addition to financial and social networks security, health organizations need an effective graph-level \gls{ad} method to examine proteins (represented as graphs) to monitor the spread and evolution of diseases. 
Targeting these application domains, we study three bioinformatics datasets: DD, PROTEINS, and ENZYMES, three molecular datasets:  NCI1, AIDS, and Mutagenicity, and three datasets of social networks: IMDB-BINARY, REDDIT-BINARY, and REDDIT-MULTI-5K. The datasets are made available by \citet{Morris+2020}, and the statistics of the datasets are given in Appendix A.

We follow the standard setting of previous work to construct an \gls{ad} task from a classification dataset \citep{ruff2018deep,golan2018deep,zhao2021using}. A classification dataset with $N$ classes produces $N$ experimental variants. In each experimental variant, one of the classes is treated as ``normal''; the other classes are considered as anomalies.
The training set and validation set only contain normal samples, while the test set contains a mix of normal samples and anomalies that have to be detected during test time. For each experimental variant, $10\%$ of the normal class is set aside for the test set, and $10\%$ of each of the other classes is added to the test set as anomalies. (The resulting fraction of anomalies in the test set is proportional to the class balance in the original dataset. The remaining $90\%$ of the normal class is used for training and validation.
We use 10-fold cross-validation to estimate the model performance. In each fold, $10\%$ of the training set is held out for validation. 
We train each model three times separately and average the test results of three runs to get the final test results in each fold. Training multiple times ensures a fair comparison as it favors methods that are robust to the random initialization.

\paragraph{Evaluation.}
Results will be reported in terms of the area under the ROC curve (AUC) ($\%$), averaged over 10 folds with standard deviation. We also report the results in terms of F1-score in Appendix C. 
In addition, all methods will be evaluated in terms of their susceptibility to performance flip.

\citet{zhao2021using} coined the term ``performance flip'' for \gls{ad} benchmarks derived from binary classification datasets. 
We generalize their definition to multiple classes:
\begin{definition}
(Performance flip.) A model suffers from performance flip on an anomaly detection benchmark derived from a classification dataset if it performs worse than random on at least one experimental variant. 
\end{definition} 

\subsection{Baselines and Implementation Details}
\label{sec:baselines}
Many deep \gls{ad} approaches that have achieved impressive results in other domains have not yet been adapted to graph-level \gls{ad}. There has been no comprehensive study of various \gls{gnn}-based graph-level \gls{ad} approaches. An additional contribution of our work is that we adapt recent advances in deep \gls{ad} to graphs.
In our empirical study, we compare \gls{ocgtl} both to \gls{gnn}-based methods and to non-\gls{gnn}-based methods, which we outline below.

\paragraph{GNN-based Baselines.}
Our study includes \gls{ocgtl}, \gls{ocgin} \citep{zhao2021using} and the self-supervised approaches \glsfirst{gtp} and \glsfirst{gtl} described in \Cref{sec:methods_summary}.
We use \gls{gin} as the feature extractor for all \gls{gnn}-based baselines to compare with \gls{ocgin} fairly. 
In particular, we use 4 \gls{gin} layers, each of which includes a two-layer MLP and graph normalization \citep{cai2020graphnorm}. The dimension of the node representations is 32.
The readout function of almost all methods consists of a two-layer MLP and then an add pooling layer.
In \gls{gtp}, the final prediction is obtained by summing the layer-wise predictions, and the readout function is composed of an add pooing layer followed by a linear layer. 
In \gls{gtp}, we employ six hand-crafted transformations. For a fair comparison, \gls{gtl} and \gls{ocgtl} use six learnable graph transformations in all experiments. 
Additional hyperparameter settings are recorded for reproducibility in Appendix B. 

\paragraph{Improved Implementation of \gls{ocgin}.}  
With these implementation details for the \glspl{gnn} we can significantly improve the performance of \gls{ocgin} over the implementation in \citet{zhao2021using}. For this reason, our empirical study includes both \gls{ocgin} (the original version with mean pooling and batch normalization) and \gls{ocgin}$^\dagger$ (our improved version with add pooling and graph normalization).  

\paragraph{Non-GNN-based Baselines.}
Besides \gls{ocpool}, we include four two-stage detection methods proposed by \citet{zhao2021using}.
Two of them use unsupervised graph embedding methods, \gls{g2v} \citep{narayanan2017graph2vec} or \acrshort{fgsd} \citep{verma2017hunt}, to extract graph-level representations. 
The other two of them make use of graph kernels (\gls{wlk} \citep{shervashidze2011weisfeiler} or \gls{pk} \citep{neumann2016propagation}), which measure the similarity between graphs.
For all two-stage detection baselines, we use \gls{ocsvm} (with $\nu=0.1$) as the downstream outlier detector. 

The number of iterations specifies how far neighborhood information can be propagated. By setting the number of iterations to $4$, we get a fair comparison to the \gls{gnn}-based methods, which all have $4$ \gls{gnn} layers. All other hyperparameters correspond to the choices in \citet{zhao2021using}. 

\subsection{Experimental Results}
\paragraph{Summary.} We compare \gls{ocgtl} with all existing baselines on
nine real-world datasets. The detection results in terms of average AUC ($\%$) with standard deviation are reported in \Cref{tab:glad_results,tab:social_network_results}. The results in terms of F1-score are reported in Appendix C. 
We can see that \gls{ocgtl} achieves competitive results on all datasets and has the best average rank of $1.4$. 
On average over nine datasets, \gls{ocgtl} outperforms \gls{ocgin} of \citet{zhao2021using} by $11.8\%$. 
We can conclude that \gls{ocgtl} raises the detection accuracy in graph-level \gls{ad} on various application domains significantly, namely by $9.6\%$ on the bioinformatics domain, by $17.7\%$ on the molecular domain, and by $8\%$ on the social-networks domain.

Moreover, methods with performance flip are marked with a $^*$ in \Cref{tab:glad_results}. 
In \Cref{tab:detailed_results} we report the results of \gls{ocgin} \citep{zhao2021using} and our method \gls{ocgtl} on both experimental variants of datasets where the performance flip is observed. We can see that all existing baselines suffer from the performance flip issue, while \gls{ocgtl} is the only model without performance flip on any of the datasets.  

\paragraph{Ablation Study of Methods.}
Here we discuss the results in \Cref{tab:glad_results} from the perspective of an ablation study to understand if and how the advantages of combing deep \gls{occ} and neural transformation learning complement each other. From results in \Cref{tab:glad_results}, we can see that \gls{ocgtl} improves over \gls{ocgin}$^\dagger$ (with our improved implementation) on 8 of 9 datasets by adding $\mathcal{L}_{\textrm{GTL}}$ as the regularization term and outperforms \gls{gtl} on 8 of 9 datasets by utilizing the Euclidean distance for detection. We can conclude, that the two terms in the loss function of \gls{ocgtl} complement each other and offer two metrics for detecting anomalies. As a result, \gls{ocgtl} consistently outperforms \gls{ocgin} and \gls{gtl}. This is aligned with our theoretical results in \Cref{sec:theory}.

\Gls{gtp} applies hand-crafted graph transformations. Its performance varies across datasets since it is sensitive to the choice of transformations. Even though it works well on the PROTEINS dataset with the chosen graph transformations, its performance on other datasets is not competitive to \gls{ocgtl}. 
Finding the right transformations for each dataset requires domain knowledge, which is not the focus of this work.
In comparison, \gls{ocgtl} learns data-specific transformations automatically and performs consistently well on various datasets.

\paragraph{Study of Design Choices.} 
To raise the bar in deep \gls{ad} on graphs, we have to understand the impact of the design choices associated with the \gls{gnn} architecture. Here we discuss the type of pooling layer for the readout function (\Cref{eqn:readout}) and normalization of the \gls{gnn} layers.

First, we study the impact of different pooling layers. \gls{ocpool} is the ideal testbed to compare add pooling, mean pooling, and max pooling due to its simplicity. Results of running the three options on nine datasets are reported in \Cref{fig:pool}. Add pooling outperforms the other options. This may result from add pooling injecting information about the number of nodes into the graph representations. \gls{ocpool} with add pooling is a simple yet effective method for \gls{ad}. It provides a simple heuristic for aggregating node attributes into graph representations. As shown in \Cref{tab:glad_results}, it does well on many datasets (particularly on the PROTEINS and AIDS datasets), even though it does not account for graph structure (edges).
\begin{figure}[t]
\centering
\includegraphics[width=0.88\linewidth]{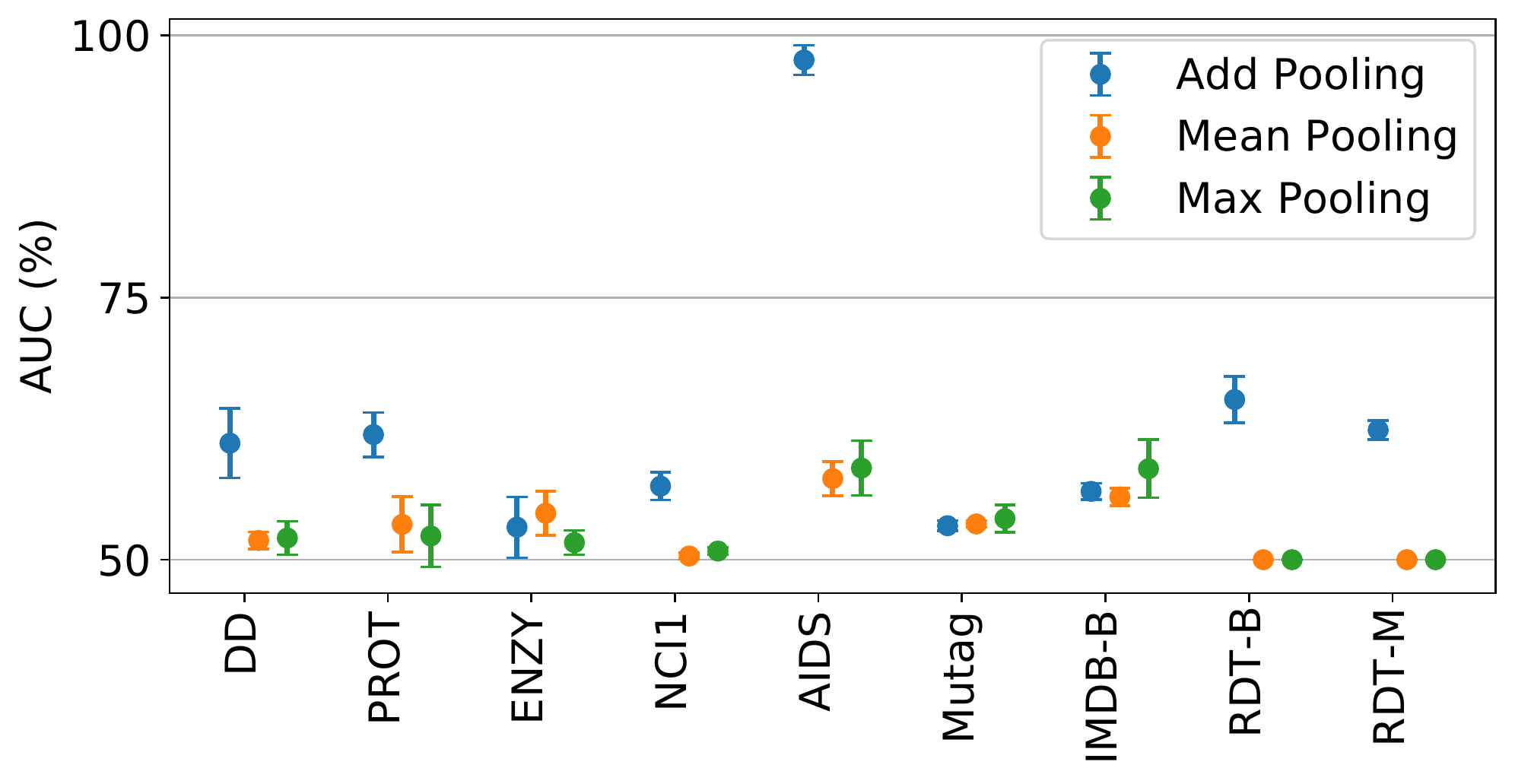}
\caption{\Gls{ocpool} with add pooling (blue) outperforms alternative choices (mean (orange), max (green)). The results on nine datasets are reported in terms of AUC ($\%$). }
\label{fig:pool}
\vspace{-5pt}
\end{figure}

\begin{figure}[t!]
\centering
		\begin{subfigure}[b]{0.75\linewidth}
		\includegraphics[width=\linewidth]{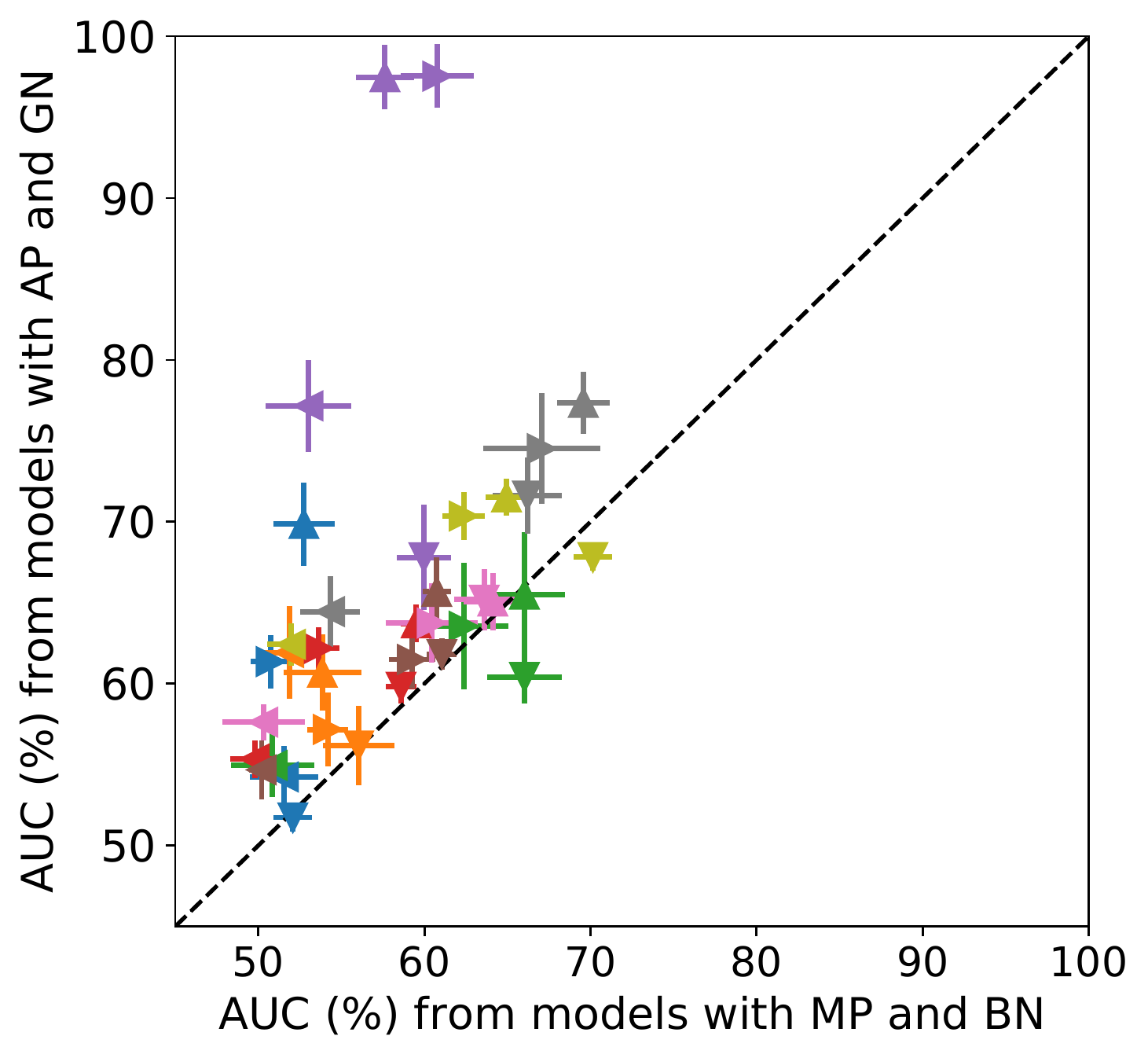}
	\end{subfigure}\\
			\begin{subfigure}[b]{\linewidth}
		\includegraphics[width=\linewidth]{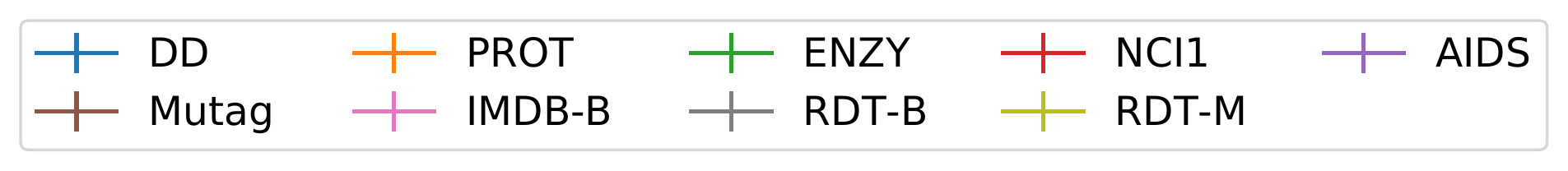}
	\end{subfigure}\\
				\begin{subfigure}[b]{0.8\linewidth}
		\includegraphics[width=\linewidth]{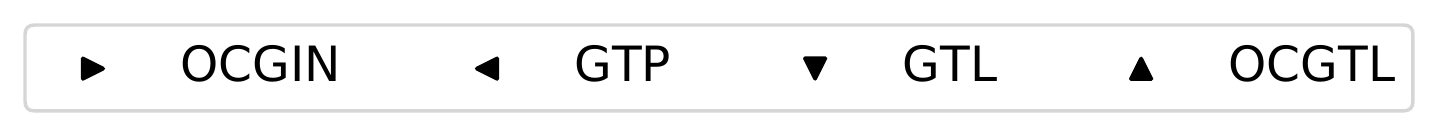}
		\end{subfigure}
    \caption{A comparison of two design choices in \textit{deep} \gls{gnn}-based methods, namely, add pooling with graph normalization (AP + GN, y-axis) and mean pooling with batch normalization (MP + BN, x-axis). Each point compares the detection results of the two variants of one model on one dataset. Most points falling above the diagonal indicate that AP + GN is the preferred design choice for \gls{gnn}-based \gls{ad} methods.}
    \label{fig:design_choice}
    \vspace{-5pt}
  \end{figure}
 
Second, to study the combined impact of the pooling layer and normalization layers on the \textit{deep} \gls{gnn}-based methods,
we compare add pooling with graph normalization (AP + GN) and mean pooling with batch normalization (MP + BN).
In \Cref{fig:design_choice}, we visualize in a scatter plot the performance of all \gls{gnn}-based methods on all nine datasets, contrasting the AP + GN result with the MP + BN result in terms of average AUCs ($\%$) with standard deviations.
Almost all points fall above the diagonal, meaning that AP + GN is preferable to MP + BN, which has been the design choice of \citet{zhao2021using} for \gls{ocgin}. With the new design choices, we are able to significantly raise the bar in graph-level \gls{ad}.

\section{Conclusion}
We develop a novel end-to-end graph-level \gls{ad} method, \gls{ocgtl}, that combines the best of deep \gls{occ} and neural transformation learning. \gls{ocgtl} mitigates the shortcomings of deep \gls{occ} by using graph transformation learning as regularization and complements graph transformation learning by introducing a sensitivity to the norm of the graph representations. 
Our comprehensive empirical study supports our claim and theoretical results. It shows that \gls{ocgtl} performs best in various challenging domains and is the only method not struggling with performance flip.  

\section*{Acknowledgements}
The Bosch Group is carbon neutral. Administration, manufacturing and research activities do no longer leave a carbon footprint. This also includes GPU clusters on which the experiments have been performed.
MK acknowledges the DFG awards KL 2698/2-1 \& KL 2698/5-1 and the BMBF awards 01$|$S18051A, 03$|$B0770E, and 01$|$S21010C.
SM acknowledges support by DARPA under Contract No. HR001120C0021, NSF under grants 2047418, 1928718, 2003237 and 2007719; DOE under grant DE-SC0022331, and gifts from Intel, Disney, and Qualcomm. Any opinions, findings and conclusions or recommendations are those of the authors and do not necessarily reflect the views of DARPA.

\bibliographystyle{named}
\bibliography{refs}

\appendix

\onecolumn
\section*{Appendix}
\section{Datasets Statistics}
\label{app:data_statistic}
We select nine graph classification datasets for the evaluation from three domains (bioinformatics, molecules, and social networks). For each dataset, we report the number of graphs, the dimension of node attributes, the average number of nodes, and the average number of edges in each class as the statistics. We list the statistics of these nine datasets in \Cref{tab:data_stats}.
  \begin{table*}[ht]
  \small
	\caption{The statistics of used datasets. We report the number of graphs, the dimension of node attributes, the average number of nodes, and the average number of edges for each class in each dataset.
	}
	\label{tab:data_stats}
	\centering

	\begin{tabular}{ccccccc}
        \hline
		  Dataset & Category& Class & $\#$Graphs & $\#$NodeAttrs&Avg.$\#$Nodes & Avg.$\#$Edges \\
		\hline
        \multirow{2}{*}{DD} & \multirow{2}{*}{Bioinformatics}& 0&691 & 89& 355.2&1806.6 \\
        &&1 & 487 &89 & 183.7 & 898.9 \\
        \hline
        \multirow{2}{*}{PRTOEINS}& \multirow{2}{*}{Bioinformatics}& 0&663 & 3& 50.0&188.1 \\
        &&1 & 450 &3 & 22.9 & 83.0 \\
        \hline
        \multirow{6}{*}{ENZYMES}& \multirow{6}{*}{Bioinformatics}& 0&100 & 3& 36.2 &132.7 \\
        &&1 & 100 &3 & 29.9 & 113.8 \\
        &&2 & 100 &3 & 28.9 & 111.2 \\
        &&3 & 100 &3 & 38.2 & 148.8 \\
        &&4 & 100 &3 & 31.4 & 119.6\\
        &&5 & 100 &3 & 31.2 & 119.6\\
        \hline
        \multirow{2}{*}{NCI1}& \multirow{2}{*}{Molecules}& 0&2053 & 37& 25.7&55.3 \\
        &&1 & 2057 &37 & 34.1 & 73.9 \\
        \hline
        \multirow{2}{*}{AIDS}& \multirow{2}{*}{Molecules}& 0 &400 &38 &37.6 &80.5\\
        && 1 & 1600 & 38 & 10.2 &20.4\\
        \hline
        \multirow{2}{*}{Mutagenicity}& \multirow{2}{*}{Molecules}& 0 &2401 &14 &29.4 &60.6\\
        && 1 & 1936 & 14 & 31.5 &62.7\\
        \hline
        \multirow{2}{*}{IMDB-B}& \multirow{2}{*}{Social networks}& 0&500 & 136& 20.1 &193.6 \\
        &&1 & 500 &136 & 19.4 & 192.6 \\
        \hline
        \multirow{2}{*}{REDDIT-B}& \multirow{2}{*}{Social networks}& 0 & 1000 & 1 & 641.3 & 1471.9\\
        && 1 & 1000 &1 & 218.0 & 519.1\\
        \hline
                \multirow{5}{*}{REDDIT-M}& \multirow{5}{*}{Social networks}& 0 & 1000 & 1 & 799.5 & 2035.5\\
        && 1 & 1000 &1 & 852.1 & 1940.4\\
        && 2 & 1000 &1 & 374.1 & 856.5\\
        && 3 & 1000 &1 & 249.6 & 534.0\\
        && 4 & 1000 &1 & 267.0 & 581.7\\
        \hline
        \end{tabular}
      \begin{tablenotes}
      \small
      \item  ${}^\star$ In IMDB-B, the one-hot degree is used as node attributes. In REDDIT-B and REDDIT-M, the constant one is used as node attributes.
    \end{tablenotes}
\end{table*}
\section{Additional Implementation Details}
\label{app:exp_details}

\textbf{Training hyperparameters:}
We use the Adam optimizer with an initial learning rate of 0.001 and decay the learning rate by 0.5 every 100 epochs. 
We set the maximum epochs as 500 and the batch size as 128.  
We use the early stopping based on validation loss (without access to the true labeled anomalies) for the training. The early stopping is implemented with a patience parameter of 100 epochs to ease the sensitivity to fluctuations in the validation loss. An early stopping without access to the true anomalies is critical for an unbiased model evaluation in the \gls{ad} tasks.

\textbf{Hardware and Software:}
All models are trained in our GPU cluster, which consists of NVIDIA GeForce GTX TITAN X GPUs, and NVIDIA TITAN X Pascal GPUs. All \gls{gnn}-based models have been implemented by means of the Pytorch Geometrics library. The implementations of \acrshort{g2v} and \acrshort{fgsd} are from Karate Club library, while the implementations of \acrshort{wlk} and \acrshort{pk} are from GraKel library. 
\begin{table*}[t!]
\small
	\caption{Average F1-scores with standard deviations of 9 methods on 9 datasets. \gls{ocgtl} performs best generally.}
	\resizebox{\linewidth}{!}{
	\centering
	\begin{tabular}{c|ccc|ccc|ccc}
 Datasets &DD &  PROT & ENZY & NCI1 & AIDS &  Mutag & IMDB-B  & RDT-B & RDT-M\\

		\hline
WLK& 0.56$\pm$0.016	& 0.58$\pm$0.036	& 0.84$\pm$0.004	& 0.49$\pm$0.008	& 0.48$\pm$0.008	& 0.52$\pm$0.01	& 0.57$\pm$0.032	& 0.5$\pm$0.007	& 0.8$\pm$0.001	\\
PK& 0.54$\pm$0.019	& 0.57$\pm$0.025	& 0.83$\pm$0.002	& 0.51$\pm$0.017	& 0.49$\pm$0.016	& 0.52$\pm$0.014	& 0.52$\pm$0.018	& 0 & 0	\\
G2V& 0.48$\pm$0.026	& 0.5$\pm$0.036	& 0.84$\pm$0.01	& 0.5$\pm$0.007	& 0.76$\pm$0.032	& 0.5$\pm$0.006	& 0.53$\pm$0.024	& 0.5$\pm$0.006	& 0.8$\pm$0.001 \\	
FGSD& 0.63$\pm$0.019	& 0.59$\pm$0.025	& 0.83$\pm$0.01	& 0.54$\pm$0.009	& 0.95$\pm$0.021	& 0.52$\pm$0.009	& 0.56$\pm$0.018	&-	&-			\\
OCGIN& 0.55$\pm$0.022	& 0.55$\pm$0.021	& 0.86$\pm$0.007	& 0.52$\pm$0.013	& 0.49$\pm$0.01	& 0.57$\pm$0.007	& 0.57$\pm$0.025	& 0.63$\pm$0.033	& 0.82$\pm$0.003	\\
\hline
OCGIN$^\dagger$& 0.59$\pm$0.018	& 0.56$\pm$0.029	& 0.86$\pm$0.011	& 0.59$\pm$0.013	& \textbf{0.97$\pm$0.012}	& 0.60$\pm$0.017	& 0.60$\pm$0.02	& 0.68$\pm$0.031	& 0.84$\pm$0.004	\\
OCPool& 0.59$\pm$0.028	& \textbf{0.61$\pm$0.024}	& 0.84$\pm$0.008	& 0.55$\pm$0.009	& \textbf{0.97$\pm$0.017}	& 0.52$\pm$0.009	& 0.53$\pm$0.012	& 0.66$\pm$0.023	& 0.82$\pm$0.002 \\
GTP& 0.52$\pm$0.014	& 0.59$\pm$0.028	& 0.84$\pm$0.005	& 0.54$\pm$0.011	& 0.63$\pm$0.033	& 0.54$\pm$0.018	& 0.55$\pm$0.013	& 0.61$\pm$0.016	& 0.82$\pm$0.003	\\
GTL& 0.57$\pm$0.020	& 0.60$\pm$0.036	& 0.86$\pm$0.007	& 0.57$\pm$0.009	& 0.53$\pm$0.024	& 0.60$\pm$0.007	& 0.60$\pm$0.024	& 0.66$\pm$0.02	& 0.84$\pm$0.003	\\
\hline
OCGTL& \textbf{0.66$\pm$0.025}	& 0.60$\pm$0.025	& \textbf{0.87$\pm$0.012}	& \textbf{0.60$\pm$0.011}	& \textbf{0.97$\pm$0.011}	& \textbf{0.63$\pm$0.02}	& \textbf{0.62$\pm$0.019}	& \textbf{0.7$\pm$0.02}	& \textbf{0.85$\pm$0.004} \\	
        \hline
	\end{tabular}}
	\begin{tablenotes}
      \small
      \item  $^\dagger$ OCGIN is the original implementation from \citet{zhao2021using}, while OCGIN$^\dagger$ denotes our improved implementation.
    \end{tablenotes}
	\label{tab:f1-score}
\end{table*}

\section{Additional Experimental Results}
\label{app:results}
Here we provide additional experimental results, which include two additional comparisons of design choices in \Cref{fig:app_designchoice} and the evaluation results of nine methods on nine datasets in terms of F1-score in \Cref{tab:f1-score}.

\begin{figure}[ht!]
	\centering
	\begin{subfigure}[b]{0.38\linewidth}
	\includegraphics[width=\linewidth]{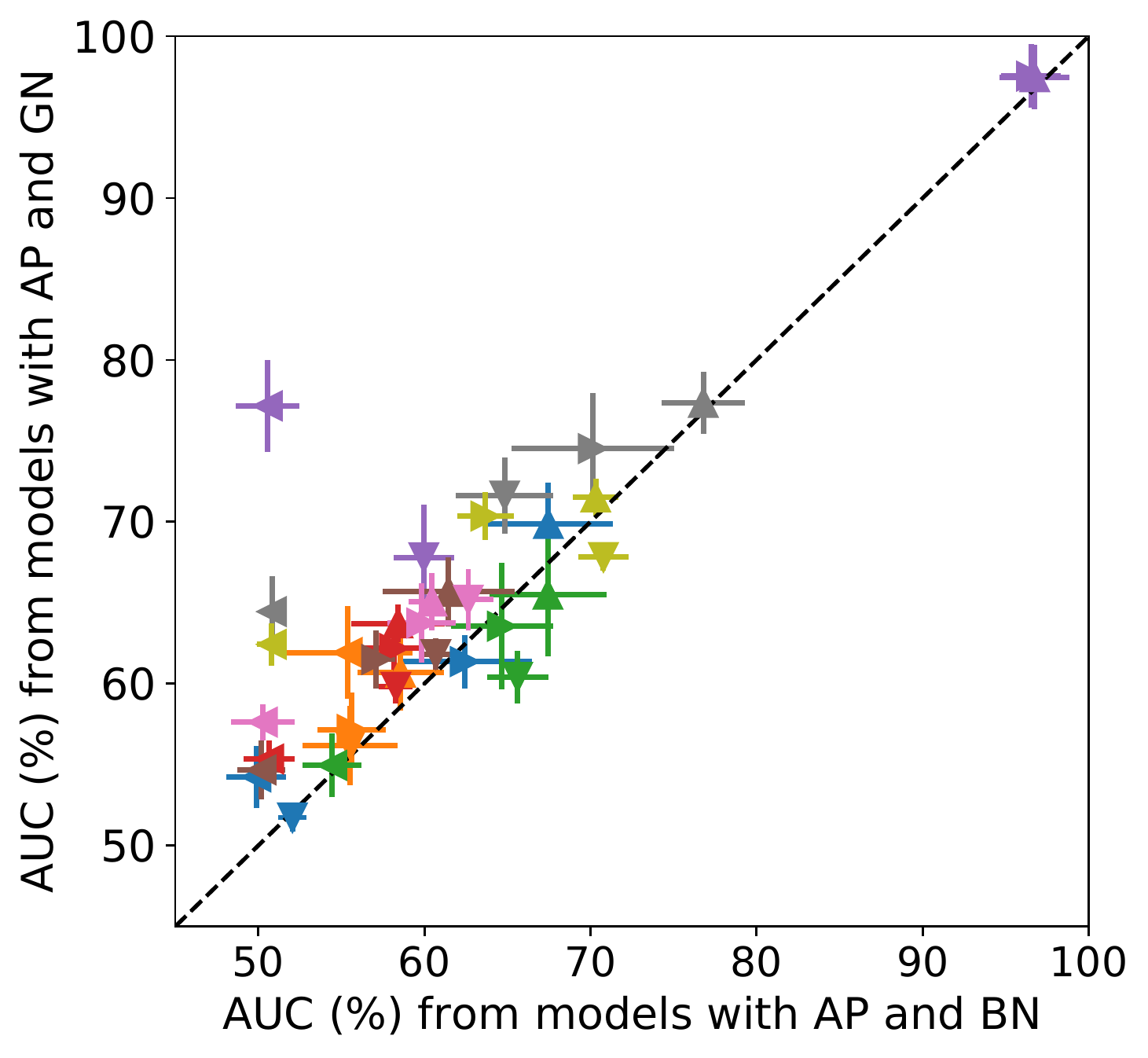}
	\caption{AP+GN v.s. AP+BN}
	\label{fig:gn_vs_bn}
	\end{subfigure}
		\begin{subfigure}[b]{0.38\linewidth}
	\includegraphics[width=\linewidth]{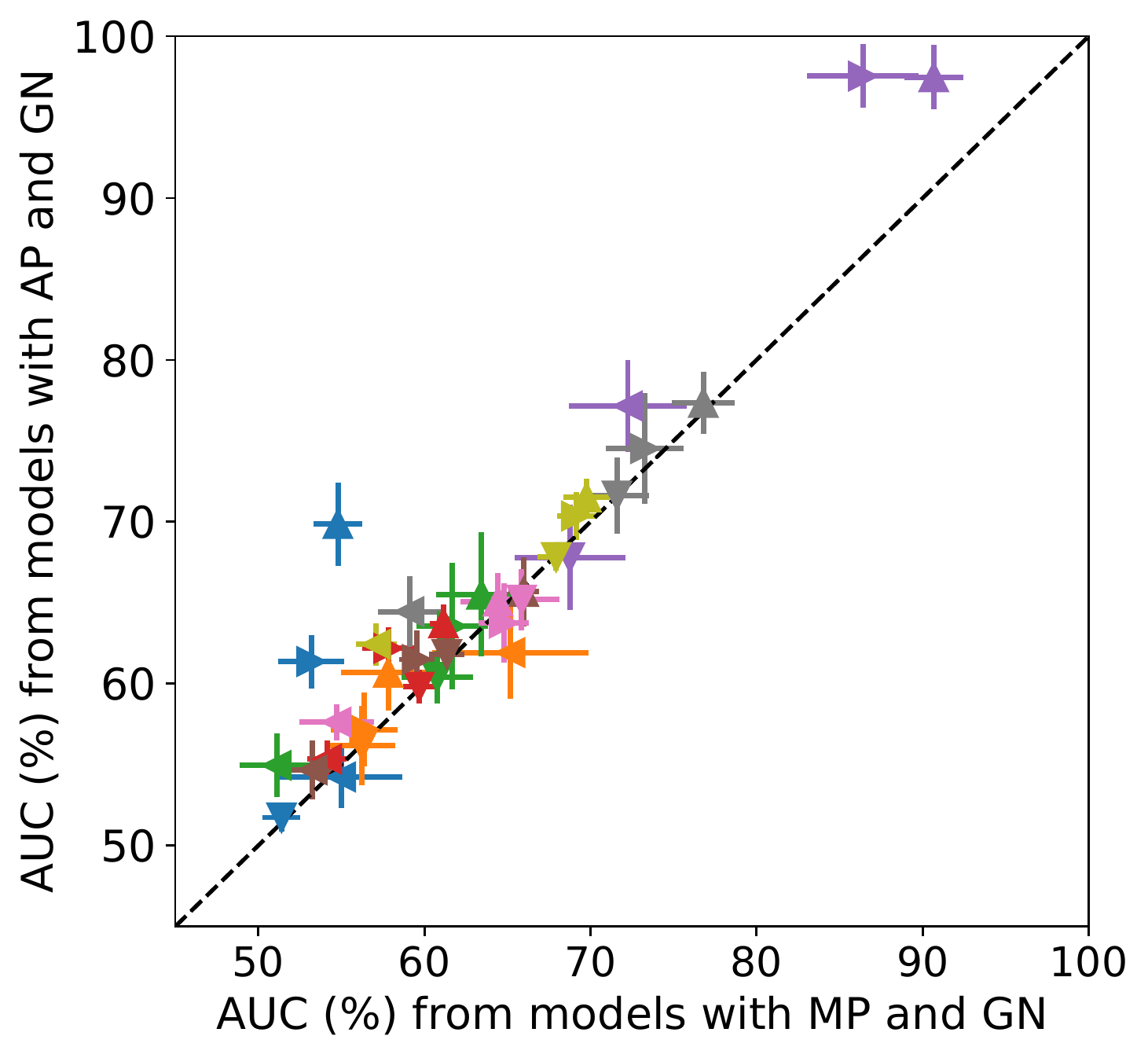}
	\caption{AP+GN v.s. MP+GN}
	\label{fig:ap_vs_mp}
	\end{subfigure}
			\begin{subfigure}[b]{0.7\linewidth}
		\includegraphics[width=\linewidth]{images/legend_data.pdf}
	\end{subfigure}\\
				\begin{subfigure}[b]{0.55\linewidth}
		\includegraphics[width=\linewidth]{images/legend_model.pdf}
		\end{subfigure}
	\caption{A comparison of design choices in \textit{deep} \gls{gnn}-based methods. (a) Results of AP+BN on x-axis against AP+GN on y-axis. (b) Results of MP+GN on x-axis against AP+GN on y-axis. In conclusion, add pooling with graph normalization is preferable.}
	\label{fig:app_designchoice}
\end{figure}

\end{document}